\let\oldbibliography\thebibliography
\renewcommand{\thebibliography}[1]{\oldbibliography{#1}
\setlength{\itemsep}{-2pt}} %Reducing spacing in the bibliography.
\DeclareMathOperator*{\argmin}{arg\,min}
\DeclareMathOperator{\T}{\mathsf{T}}
\DeclareMathOperator{\E}{\mathds{E}}
\DeclareMathOperator{\w}{\boldsymbol{w}}
\DeclareMathOperator{\x}{\boldsymbol{x}}
\DeclareMathOperator{\s}{\boldsymbol{s}}
\DeclareMathOperator{\y}{\boldsymbol{y}}
\theoremstyle{plain}
\newtheorem{assumption}{Assumption}
\newtheorem{theorem}{Theorem}
\newtheorem{lemma}{Lemma}
\theoremstyle{definition}
\newtheorem{example}{Example}
\title{Second-Order Guarantees in Federated Learning}
\name{Stefan Vlaski, Elsa Rizk and Ali H. Sayed\thanks{Emails:\{stefan.vlaski, elsa.rizk, ali.sayed\}@epfl.ch. Preliminary results limited to single, unbiased local updates (\( E_k = 1 \)) appear in~\cite{Vlaski20Kailath}.}
\address{School of Engineering, \'{E}cole Polytechnique F\'{e}d\'{e}rale de Lausanne}}
\begin{document}
\ninept
\maketitle
\begin{abstract}
 Federated learning is a useful framework for centralized learning from distributed data under practical considerations of heterogeneity, asynchrony, and privacy. Federated architectures are frequently deployed in deep learning settings, which generally give rise to non-convex optimization problems. Nevertheless, most existing analysis are either limited to convex loss functions, or only establish first-order stationarity, despite the fact that saddle-points, which are first-order stationary, are known to pose bottlenecks in deep learning. We draw on recent results on the second-order optimality of stochastic gradient algorithms in centralized and decentralized settings, and establish second-order guarantees for a class of federated learning algorithms.
\end{abstract}
\section{Introduction}
\label{sec:intro}
Federated learning pursues solutions to global optimization problems over distributed collections of agents by relying on the exchange of model updates in lieu of raw data. Federated architectures are frequently deployed in highly heterogeneous environments, where different agents have access to data of varying quality and varying computational resources. Performance guarantees for federated architectures are generally limited to convex loss functions, or to establishing limiting first-order stationarity on non-convex losses. First-order stationary points include minima, but can be saddle-points or local maxima as well. Saddle-points in particular have been identified as bottlenecks for optimization algorithms in many important applications, such as deep learning~\cite{Choromanska14, Kawaguchi16}. It is hence desirable to devise algorithms and performance analyses that ensure efficient escape from saddle-points despite high levels of asynchrony and heterogeneity. Recent works have identified gradient perturbations as playing a key role in guaranteeing efficient saddle-points escape in centralized and fully decentralized architectures~\cite{Ge15, HadiDaneshmand18, Jin19, Fang19, Vlaski19single, Vlaski19nonconvexP1, Vlaski19nonconvexP2}. Here, we establish analogous results in the federated learning framework, extending recent analysis from~\cite{Vlaski20Kailath} to allow for multiple local updates.

Specifically, we consider a collection of \( K \) agents, where each agent \( k \) is equipped with a risk loss function \( J_k(w) \), which is defined as the expectation of a loss \( Q(w; \x_k) \):
\begin{equation}\label{eq:def_q}
  J_k(w) \triangleq \E_{\x_k} Q(w; \x_k)
\end{equation}
Here, \( Q(w; \x_k) \) quantifies the fit of the model parametrization \( w \) to the random data \( \x_k \). Note that we allow for the data \( \x_k \) to vary with the agent index \( k \), resulting in different risk functions \( J_k(w) \) at different agents. It is common in multi-agent settings to pursue a model \( w^o \) that performs well on average by solving:
\begin{equation}\label{eq:aggregate_cost}
  w^o \triangleq \argmin_w \sum_{k=1}^K p_k J_k(w)
\end{equation}
where the \( \{ p_k \}_{k=1}^K \) denote non-negative weights, normalized to add up to one without loss of generality. It is common to let \( p_k = \frac{1}{K} \), hence giving equal weight to every agent \( k \). In settings where agents are heterogeneous, and exhibit varying amounts of data, or varying computational capabilities, heterogeneous weights \( p_k \) can result in improved performance, which we allow for generality. Perhaps the most straightforward approach to pursuing \( w^o \) is by means of gradient descent, applied directly to~\eqref{eq:aggregate_cost}, resulting in:
\begin{equation}\label{eq:gd}
  w_i = w_{i-1} - \mu \sum_{k=1}^{K} p_k \nabla J_k(w_{i-1}) = w_{i-1} - \mu \nabla J(w_{i-1})
\end{equation}
where we defined \( J(\cdot) \triangleq \sum_{k=1}^{K} p_k J_k(\cdot) \). This implementation has two important drawbacks, which render it impractical in a federated learning setting. First, it requires full agent participation at every iteration, by means of computation and communication of \( \nabla J_k(w_{i-1}) \) with a central aggregator. In federated learning applications, where agents may or may not be able to participate in the update at any given iteration, this can cause bottlenecks. Second, evaluation of the exact gradient \( \nabla J_k(w_{i-1}) \) may be infeasible or costly, since it depends on the full distribution of \( \x_k \) through its expectation in~\eqref{eq:def_q}.

\subsection{Related Works}
Distributed algorithms for solving aggregate optimization problems similar to~\eqref{eq:aggregate_cost} can be broadly classified into those that involve communication with a centralized parameter server~\cite{Bertsekas97parallel, Agarwal11, Stich19, Khaled19}, and those that operate in a fully decentralized manner through peer-to-peer interactions~\cite{Bertsekas97incremental, Nedic09, Sayed14, Sayed14proc, Duchi12}. Federated Averaging (FedAvg) was introduced in~\cite{mcmahan16}, and has sparked a number of studies and extensions, including FedDane~\cite{Li19}, FedProx~\cite{Tian20}, hierarchical FedAvg~\cite{Liu20}, and dynamic FedAvg~\cite{Rizk20}. While the pursuit of an optimal average model as in~\eqref{eq:aggregate_cost} is most common, multi-task variations have been introduced as well, both in a federated~\cite{Smith17} and decentralized settings~\cite{Nassif20}.

Most prior works on federated learning and the FedAvg algorithm focus on convex risk functions~\cite{Stich19, Khaled19, Rizk20}, or establish first-order stationarity in non-convex environments~\cite{Wang18, Zhou18, Hao19, Li19, Tian20, Liu20}. On the other hand, saddle points, which are first-order stationary, have been identified as bottlenecks in many learning applications, including deep learning~\cite{Choromanska14}. This contrast to the empirical success of deep learning has motivated a number of recent works to consider the ability of gradient descent algorithms to escape saddle-points and find ``good'' local minima, both in centralized~\cite{Gelfand91, Ge15, Du17, HadiDaneshmand18, Jin19, Fang19, Vlaski19single} and decentralized settings~\cite{Scutari18, Swenson19, Vlaski19nonconvexP1, Vlaski19nonconvexP2}. The broad take-away from these works is that perturbations, either to the initialization or gradient updates, play a key role in pushing iterates away from strict-saddle points and toward local minimizers. In this work, we extend these results to the federated learning setting, where agents may take an arbitrary number of local steps before communicating with the central parameter server.

\section{Algorithm Formulation}
\subsection{The Federated Averaging Scheme}
The need for full and exact agent participation in evaluating~\eqref{eq:gd} in a federated setting is addressed in the stochastic federated averaging (FedAvg) framework~\cite{mcmahan16}. To this end, the parameter server selects at iteration \( i \) a subset of \( L \) agents, collected in the set \( \mathcal{N}_i \). We introduce a random indicator variable \( \mathds{1}_{k, i} \), which indicates whether agent \( k \) participates at time \( i \), i.e., \( \mathds{1}_{k, i} = 1 \Longleftrightarrow k \in \mathcal{N}_i \), and \( 0 \) otherwise. We assume for simplicity that agents are sampled uniformly at random, resulting in:
\begin{equation}
  \mathrm{Pr}\left\{ \mathds{1}_{k, i} = 1 \right\} = \E\left\{ \mathds{1}_{k, i} \right\} = \frac{L}{K}
\end{equation}
Then, the parameter server provides participating agents with the current aggregate model \( \w_{i-1} \). They use the model to initialize their local iterate to \( \w_{k, 0} = \w_{i-1} \) and then perform \( E_k \) local stochastic update steps for \( e = 1, \ldots, E_k \):
\begin{equation}\label{eq:federated_local}
  \w_{k, e} = \w_{k, e-1} - \mu K \mathds{1}_{k, i}  \frac{p_k}{E_k} \widehat{\nabla J}_k^e(\w_{k, e-1})
\end{equation}
Here, \( \widehat{\nabla J}_k^e(\w_{k, e-1}) \) denotes a generic stochastic approximation of the gradient \( \nabla J_k(\w_{k, e-1}) \). Using realizations for the random variable \( \x_k \), it is common to construct \( \widehat{\nabla J}_k^e(\w_{k, e-1}) \triangleq \nabla Q(\w_{k, e-1}, \x_{k, e}) \), resulting in stochastic gradient descent --- we will discuss other constructions and their advantages in Section~\ref{sec:general_stochastic_approximation} below. The updated models are then fused by the central aggregator according to:
\begin{equation}\label{eq:federated_combination}
  \w_i = \frac{1}{L} \sum_{k=1}^K \mathds{1}_{k, i} \w_{k, E_k}
\end{equation}

\subsection{A General Stochastic Approximation Framework}\label{sec:general_stochastic_approximation}
We now present a number of choices for the stochastic gradient approximation \( \widehat{\nabla J}_k^e(\w_{k, e-1}) \) to illustrate the generality of~\eqref{eq:federated_local}.
\begin{example}[\textbf{Mini-Batch SGD}]\label{ex:sgd}
  Given a collection of \( B_k \) samples \( \left\{ \x_{k, e, b} \right\}_{b=1}^{B_k} \), constructing:
  \begin{equation}
    \widehat{\nabla J}_k^e(\w_{k, e-1}) = \frac{1}{B_k} \sum_{b=1}^{B_k} \nabla Q(\w_{k, e-1}, \x_{k, e, b})
  \end{equation}
  yields mini-batch stochastic gradient descent, or simply stochastic gradient descent when \( B_k = 1 \).\qed
\end{example}
\begin{example}[\textbf{Perturbed SGD}]\label{ex:psgd}
  It has been observed, both empirically and analytically, that adding additional perturbations to the stochastic gradient update can improve the performance of the gradient descent algorithm in non-convex settings~\cite{Jin19}. In the presence of privacy concerns, perturbations to update directions can also be added in order to ensure differential privacy~\cite{Dwork14}. This corresponds to constructing:
  \begin{equation}\label{eq:psgd}
    \widehat{\nabla J}_k^e(\w_{k, e-1}) = \nabla Q(\w_{k, e-1}, \x_{k, e}) + \boldsymbol{v}_{k, e}
  \end{equation}
  where \( \boldsymbol{v}_{k, e} \) denotes i.i.d. perturbation noise with zero mean, following for example a Gaussian or Laplacian distribution.\qed
\end{example}
\begin{example}[\textbf{Straggling Agents}]\label{ex:straggling}
  Consider a setting where agents may be unreliable, in the sense that, despite being chosen by the parameter server to participate at iteration \( i \), they may fail to return a locally updated model \( \w_{k, E_k} \) by the time the server needs to re-aggregate models in~\eqref{eq:federated_combination}. Such a setting can be modeled via:
  \begin{align}
    \widehat{\nabla J}_k^e(\w_{k, e-1}) = \begin{cases} \frac{1}{\delta_k} \nabla Q(\w_{k, e-1}, \x_{k, e})\ &\mathrm{with}\ \mathrm{prob.}\ \delta_k, \\ 0 \ &\mathrm{otherwise.} \end{cases}
  \end{align}
  Here, the scaling factor \( \frac{1}{\delta_k} \) has been added to ensure unbiased gradient approximations, by allowing agents who participate less frequently to take larger steps. Alternative stochastic models for asynchronous behavior are possible as well~\cite{Zhao15partI}.\qed
\end{example}
\noindent It can be readily verified, that all three constructions in Examples~\ref{ex:sgd}--\ref{ex:straggling} are unbiased approximations of the true gradient \( \nabla J_k(\w_{k, e-1}) \), i.e.:
\begin{equation}
  \E \left\{ \widehat{\nabla J}_k^e(\w_{k, e-1}) | \w_{k, e-1} \right\} = \nabla J_k(\w_{k, e-1})
\end{equation}
Nevertheless, the stochastic nature of the approximation induces a gradient noise into the evolution of the algorithm, which we denote by:
\begin{equation}\label{eq:local_gradient_noise}
  \s_{k, e}(\w_{k, e-1}) \triangleq \widehat{\nabla J}_k^e(\w_{k, e-1}) - \nabla J_k(\w_{k, e-1})
\end{equation}
We impose the following general conditions on the stochastic gradient noise process, and hence the construction of the stochastic gradient approximation itself.
\begin{assumption}[Gradient Noise Process]\label{as:approximation}
  The gradient noise process~\eqref{eq:local_gradient_noise} satisfies:
  \begin{align}
    \E\{\s_{k, e}(\w_{k, e-1})|\w_{k, e-1}\}&= 0 \label{eq:zero_mean} \\
    \E\{\|\s_{k, e}(\w_{k, e-1})\|^4|\w_{k, e-1}\}&\leq \beta_k^4 \|\nabla J_k(\w_{k, e-1})\|^4 + \sigma_k^4 \label{eq:bounded_fourth}
  \end{align}
  for some $\beta_k^4, \sigma_k^4\geq 0$. It is assumed that the gradient noise process is mutually independent over space and time, after conditioning on the current iterate:
  \begin{align}
    \mathds{E}\left\{ \s_{k_1, e_1}(\w) \s_{k_2, e_2}(\w)^{\mathsf{T}}| \w \right\} =&\: 0 \ \ \ \forall \ k_1 \neq k_2 \ \mathrm{or}\ e_1 \neq e_2
  \end{align}
  and the gradient noise covariance:
  \begin{equation}\label{eq:def_rs}
    R_{s, k}(\w_{k, e-1}) \triangleq \mathds{E}\left\{ \s_{k, e}(\w_{k, e-1}) \s_{k, e}(\w_{k, e-1})^{\mathsf{T}}| \w_{k, e-1} \right\}
  \end{equation}
  is smooth:
  \begin{equation}\label{eq:lipschitz_r}
      \| R_{s, k}(x) - R_{s, k}(y) \| \le \beta_R {\| x - y \|}^{\gamma}
    \end{equation}
    for some \( \beta_R \) and \( 0 < \gamma \le 4\), and there is a gradient noise component (in the aggregate) in every direction:
    \begin{equation}\label{eq:persistent_noise}
      R_{s, k}(x) \ge \sigma_{\ell}^2 I, \ \ \ \forall\ x
    \end{equation}\qed
\end{assumption}
\noindent Relation~\eqref{eq:zero_mean} ensures that the stochastic gradient approximation is unbiased, while~\eqref{eq:bounded_fourth} imposes a relative bound on the fourth-order moment~\cite{Sayed14}. In light of Jensen's inequality, it is stronger than imposing a bound on the gradient noise variance, but will allow us to more granularly study the impact of the gradient noise around saddle-points; on the other hand, it is weaker than the more common conditions of bounded noise with probability one, or a sub-Gaussian condition~\cite{Jin19, Fang19}. Relation~\eqref{eq:lipschitz_r} ensures that the distribution of the stochastic gradient noise process is locally smooth, allowing us to formulate an accurate short-term model around saddle-points~\cite{Vlaski19single}. It has previously been utilized to analyze in detail the steady-state behavior of stochastic gradient algorithms in convex environments~\cite{Sayed14}. The persistent noise condition~\eqref{eq:persistent_noise} will allow recursions to efficiently escape saddle-points by relying on the aggregate effect of the noise coupled with the local instability of saddle-points. It can be relaxed to only require a noise component to be present in the subspace of local descent directions~\cite{HadiDaneshmand18, Vlaski19single}. Since~\eqref{eq:persistent_noise} can always be ensured by adding a small amount of isotropic perturbations to the stochastic gradient update as in~\eqref{eq:psgd}, it will be sufficient, for simplicity, to impose~\eqref{eq:persistent_noise} in this work.

\section{Performance Analysis}
\subsection{A Perturbed Centralized Gradient Recursion}
By iterating~\eqref{eq:federated_local}, we find for the final local update \( \w_{k, E_k} \) sent back to the parameter server:
\begin{equation}
  \w_{k, E_k} = \w_{i-1} - \mu K \mathds{1}_{k, i} \frac{p_k}{E_k} \sum_{e=1}^{E_k} \widehat{\nabla J}_k^e(\w_{k, e-1})
\end{equation}
and after aggregation in~\eqref{eq:federated_combination}:
\begin{align}
  \w_{i} = \w_{i-1} - \mu \frac{K}{L} \sum_{k=1}^{K} \mathds{1}_{k, i} \frac{p_k}{E_k} \sum_{e=1}^{E_k} \widehat{\nabla J}_k^e(\w_{k, e-1})
\end{align}
We can reformulate this recursion to resemble the deterministic recursion~\eqref{eq:gd} as:
\begin{equation}\label{eq:federated_as_gd}
  \w_i = \w_{i-1} - \mu \sum_{k=1}^{K} p_k \nabla J_k(\w_{i-1}) - \mu \boldsymbol{s}_i - \mu \boldsymbol{d}_i
\end{equation}
where \( \boldsymbol{s}_i \) and \( \boldsymbol{d}_i \) are perturbation terms:
\begin{align}
  \s_i \triangleq&\: \frac{K}{L} \sum_{k=1}^K \mathds{1}_{k, i} \frac{p_k}{E_k} \sum_{e=1}^{E_k} \widehat{\nabla J}_k^e(\w_{i-1}) - \nabla J(\w_{i-1}) \label{eq:def_s}\\
  \boldsymbol{d}_{i} \triangleq&\: \frac{K}{L} \sum_{k=1}^K \mathds{1}_{k, i} \frac{p_k}{E_k} \sum_{e=1}^{E_k} \left( \widehat{\nabla J}_k^e(\w_{k, e-1}) - \widehat{\nabla J}_k^e(\w_{i-1}) \right) \label{eq:def_d}
\end{align}
Comparing~\eqref{eq:federated_as_gd} with~\eqref{eq:gd}, we observe that the FedAvg implementation can be viewed as a perturbed gradient descent recursion. Perturbations have recently been shown to be instrumental in allowing local descent algorithms to escape from saddle-points and converge to local minima of non-convex loss functions. However, those studies are generally limited to assuming unbiased perturbations. In contrast, employing~\eqref{eq:federated_local} with \( E_k > 1 \) results in biased gradient perturbations resulting from the term \( \boldsymbol{d}_i \), rendering current analyses inapplicable. In this work, we generalize recent results on the second-order guarantees of stochastic gradient algorithms~\cite{Vlaski19single} to allow for biased gradient perturbations, and recover second-order guarantees for the FedAvg algorithm for heterogeneous agents. We describe and discuss the dependence of these guarantees on the various parameters of the architecture, such as agent participation rate, levels heterogeneity, asynchrony, and computational capabilities.

We introduce the following smoothness conditions to ensure that the impact of the perturbations~\eqref{eq:def_s}--\eqref{eq:def_d} is limited.
\begin{assumption}[Smoothness]\label{as:smoothness}
  The local costs $J_k(w)$ are assumed to be smooth:
  \begin{align}
  \left \| \nabla J_k(x) - \nabla J_k(y) \right \| &\leq \delta \left \| x-y \right \|\\
  \left \| \nabla^2 J_k(x) - \nabla^2 J_k(y) \right \| &\leq \rho \left \| x-y \right \|
  \end{align}
  Heterogeneity between agents is quantified by their gradient disagreement:
  \begin{equation}\label{eq:disagreement}
    \| \nabla J_k(x) - \nabla J_{\ell}(x)\| \le G
  \end{equation}
  Furthermore, the costs themselves are assumed to be Lipschitz, implying uniformly bounded gradient:
  \begin{equation}\label{eq:bounded_gradient}
    \|\nabla J_k(x)\| \le U
  \end{equation}
  and the stochastic approximations of the gradient are Lipschitz in the mean-fourth sense:
  \begin{equation}\label{eq:ms_lipschitz}
    \E \left\{ {\left\| \widehat{\nabla J}_k^e(\x) - \widehat{\nabla J}_k^e(\y) \right \|}^4 | \x, \y \right\} \leq \hat{\delta}^4 \left \| \x-\y \right \|^4
  \end{equation}\qed
\end{assumption}

\subsection{Perturbation Bounds}
Under the conditions on the stochastic gradient construction in Assumption~\ref{as:approximation}, and the smoothness conditions in Assumption~\ref{as:smoothness} we can bound the perturbations~\eqref{eq:def_s}--\eqref{eq:def_d}.
\begin{lemma}[Perturbation Bounds]\label{LEM:PERTURBATION_BOUNDS}
  The perturbations to recursion~\eqref{eq:federated_as_gd} are bounded as:
  \begin{align}
    \E \left\{ \s_i | \w_{i-1} \right\} =&\: 0 \label{eq:aggregate_zero_mean} \\
    \E \left\{ \|\s_i\|^{4} | \w_{i-1} \right\} \le&\: \overline{\beta}^4 \|\nabla J(\w_{i-1})\|^4 + \overline{\sigma}^4 \label{eq:aggregate_bounded_fourth}\\
    \E \left\{ \|\boldsymbol{d}_i\|^{4} | \w_{i-1} \right\} \le&\: \mu^4 \sum_{k=1}^K p_k^5 \frac{K^6}{L^2} \hat{\delta}^4 8 \left(U^4+ \beta_k^4 U^4 + \sigma_k^4 \right) \label{eq:aggregate_bounded_incremental}
  \end{align}
  where we introduced the constants:
  \begin{align}
      \overline{\beta}^4 \triangleq&\: \sum_{k=1}^K p_k \overline{\beta}_k^4 \\
      \overline{\sigma}^4 \triangleq&\: \sum_{k=1}^K p_k \overline{\sigma}_k^4 \\
      \overline{\beta}_k^4 \triangleq&\: 192 \frac{K^3}{L^3} \frac{\beta_k^4}{E_k^2}  + 64 \frac{L}{K} \left( \frac{K-L}{L} \right)^4 + 64 \frac{K-L}{K} \\
      \overline{\sigma}_k^4 \triangleq&\: \left( 192 \frac{K^3}{L^3} \frac{\beta_k^4}{E_k^2} + 64 \frac{L}{K} \left( \frac{K-L}{L} \right)^4 + 64 \frac{K-L}{K} \right) G^4 \notag \\
      &\: + 24 \frac{K^3}{L^3} \frac{\sigma_k^4}{E_k^2}
    \end{align}
    The covariance of the aggregate gradient noise \( \s_i \) evaluates to:
    \begin{align}
      &\:\E \left\{ \s_i \s_i^{\T} | \w_{i-1} \right\} \notag \\
      =&\:\frac{K}{L}\sum_{k=1}^K \frac{p_k^2}{E_k} R_{s, k}(\w_{i-1}) + \frac{K}{L} \frac{K-L}{K-1} \sum_{k=1}^K {t}(\w_{i-1}){t}(\w_{i-1})^{\T} \notag \\
      \ge &\: \overline{\sigma}_{\ell}^2 I \triangleq \left( \sum_{k=1}^K \frac{p_k^2}{E_k} \right) \sigma_{\ell}^2 I
    \end{align}
    where \( {t}(\w_{i-1}) \) denotes the deviation:
    \begin{align}
      {t}(\w_{i-1}) \triangleq p_k \nabla J_k(\w_{i-1}) - \frac{1}{K} \nabla J(\w_{i-1})
    \end{align}
\end{lemma}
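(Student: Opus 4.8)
The zero-mean identity \eqref{eq:aggregate_zero_mean} follows by linearity together with the independence of the participation indicators from the gradient noise. Conditioned on $\w_{i-1}$, every $\widehat{\nabla J}_k^e(\w_{i-1})$ in \eqref{eq:def_s} is evaluated at the \emph{fixed} broadcast point, so its conditional mean is $\nabla J_k(\w_{i-1})$; substituting $\E\{\mathds{1}_{k,i}\} = L/K$, the prefactor $\tfrac{K}{L}\cdot\tfrac{L}{K}$ cancels and the inner sum over the $E_k$ local steps cancels $\tfrac{1}{E_k}$, leaving $\sum_k p_k\nabla J_k(\w_{i-1}) - \nabla J(\w_{i-1}) = 0$.

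For the fourth-moment bound \eqref{eq:aggregate_bounded_fourth} the plan is to split $\s_i = \s_i^{\mathrm{samp}} + \s_i^{\mathrm{grad}}$, where $\s_i^{\mathrm{samp}} = \sum_k \left(\tfrac{K}{L}\mathds{1}_{k,i}-1\right)p_k\nabla J_k(\w_{i-1})$ isolates the randomness of agent selection and $\s_i^{\mathrm{grad}} = \tfrac{K}{L}\sum_k\mathds{1}_{k,i}\tfrac{p_k}{E_k}\sum_e\s_{k,e}(\w_{i-1})$ isolates the stochastic-gradient noise. Conditioned on the selection, the terms $\s_{k,e}(\w_{i-1})$ are independent and zero-mean, so the fourth-moment inequality for independent zero-mean sums, $\E\|\sum_j u_j\|^4 \le \sum_j\E\|u_j\|^4 + 3(\sum_j\E\|u_j\|^2)^2$, together with the relative bound \eqref{eq:bounded_fourth}, yields a bound of the form $24\tfrac{K^3}{L^3}\sum_k p_k\tfrac{1}{E_k^2}\big(\beta_k^4\|\nabla J_k(\w_{i-1})\|^4 + \sigma_k^4\big)$; here $\tfrac{K^3}{L^3}$ arises from $(\tfrac{K}{L})^4$ times the diagonal participation moment $\E\{\mathds{1}_{k,i}^2\} = L/K$, and $\tfrac{1}{E_k^2}$ from averaging $E_k$ local draws. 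The sampling term is bounded using the first-, second-, and fourth-order moments of the without-replacement indicators, producing the $\tfrac{L}{K}(\tfrac{K-L}{L})^4$ and $\tfrac{K-L}{K}$ factors multiplying $\sum_k p_k\|\nabla J_k(\w_{i-1})\|^4$. Each $\|\nabla J_k(\w_{i-1})\|^4$ is then converted to the aggregate: since $\|\nabla J_k - \nabla J\| \le G$ follows from \eqref{eq:disagreement} and convexity of the weighted average, $\|\nabla J_k\|^4 \le 8\|\nabla J\|^4 + 8G^4$, whereby the $\|\nabla J\|^4$ parts collect into $\overline{\beta}^4$ and the $G^4$ parts into $\overline{\sigma}^4$. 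This split by the factor $8$ accounts for the coefficients: $192 = 8\cdot 24$ on the $\beta_k^4$ term (eight times the $\sigma_k^4$ coefficient $24$, which needs no conversion), and $64 = 8\cdot 8$ on both the $\|\nabla J\|^4$ and $G^4$ sampling contributions.

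The incremental perturbation \eqref{eq:aggregate_bounded_incremental} measures the gap between evaluating the stochastic gradient along the moving local iterate and at the broadcast point. I would apply the mean-fourth Lipschitz condition \eqref{eq:ms_lipschitz} to obtain $\E\|\widehat{\nabla J}_k^e(\w_{k,e-1}) - \widehat{\nabla J}_k^e(\w_{i-1})\|^4 \le \hat{\delta}^4\,\E\|\w_{k,e-1}-\w_{i-1}\|^4$, then unroll \eqref{eq:federated_local} as $\w_{k,e-1}-\w_{i-1} = -\mu K\mathds{1}_{k,i}\tfrac{p_k}{E_k}\sum_{e'<e}\widehat{\nabla J}_k^{e'}(\w_{k,e'-1})$, which exhibits the drift as $O(\mu)$ and hence its fourth moment as $O(\mu^4)$. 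Bounding the accumulated stochastic gradients with $\|\nabla J_k\| \le U$ from \eqref{eq:bounded_gradient} and the noise bound \eqref{eq:bounded_fourth} gives $\E\|\widehat{\nabla J}_k^{e'}\|^4 \le 8(U^4 + \beta_k^4 U^4 + \sigma_k^4)$, and collapsing the sums over $e$ and $k$ by Jensen/power-mean inequalities together with $\E\{\mathds{1}_{k,i}\} = L/K$ delivers the stated $\mu^4$ bound.

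For the covariance I would expand $\E\{\s_i\s_i^{\T}|\w_{i-1}\} = \E\{\s_i^{\mathrm{samp}}(\s_i^{\mathrm{samp}})^{\T}|\w_{i-1}\} + \E\{\s_i^{\mathrm{grad}}(\s_i^{\mathrm{grad}})^{\T}|\w_{i-1}\}$, the cross term vanishing because $\s_i^{\mathrm{grad}}$ is conditionally zero-mean given the selection while $\s_i^{\mathrm{samp}}$ is selection-measurable. Independence of the $\s_{k,e}$ across $(k,e)$ and $\E\{\mathds{1}_{k,i}^2\} = L/K$ collapse the gradient-noise term to $\tfrac{K}{L}\sum_k\tfrac{p_k^2}{E_k}R_{s,k}(\w_{i-1})$, while the indicator covariance and the identity $\sum_{k\neq\ell}g_kg_\ell^{\T} = \nabla J\,\nabla J^{\T} - \sum_k g_kg_k^{\T}$ with $g_k = p_k\nabla J_k$ rearrange the sampling term exactly into $\tfrac{K}{L}\tfrac{K-L}{K-1}\sum_k t(\w_{i-1})t(\w_{i-1})^{\T}$. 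The lower bound then drops the positive-semidefinite sampling term and invokes the persistent-noise condition \eqref{eq:persistent_noise}, so that $\tfrac{K}{L}\sum_k\tfrac{p_k^2}{E_k}R_{s,k} \succeq \tfrac{K}{L}\big(\sum_k\tfrac{p_k^2}{E_k}\big)\sigma_{\ell}^2 I \succeq \big(\sum_k\tfrac{p_k^2}{E_k}\big)\sigma_{\ell}^2 I$ using $K/L\ge 1$. The main obstacle is the fourth-moment bound on $\s_i^{\mathrm{samp}}$ together with the matching of its constants: because the agents are sampled without replacement the indicators are correlated, so the fourth moment couples up to four distinct indices and one must carefully track the joint moments to recover precisely the $\tfrac{K-L}{L}$ and $\tfrac{K-L}{K}$ dependence; by contrast, the bias term $\boldsymbol{d}_i$ is conceptually the novel ingredient (a genuinely biased perturbation whenever $E_k>1$) but its bound is a comparatively routine, if tedious, unrolling.
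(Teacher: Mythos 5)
Your treatment of the zero-mean identity, of the covariance, and of the bias term \( \boldsymbol{d}_i \) all line up with the paper: the zero-mean and \( \boldsymbol{d}_i \) arguments are essentially the paper's own, and your covariance computation (sampling/gradient split, vanishing cross term, pairwise indicator moments plus the identity \( \sum_{k\neq\ell}g_kg_\ell^{\T} = \nabla J\,\nabla J^{\T} - \sum_k g_kg_k^{\T} \)) is a clean, valid reorganization of the paper's direct expansion, which instead conditions via Bayes' rule on \( \mathds{1}_{k,i}=1 \), uses \( \E\{\mathds{1}_{\ell,i}|\mathds{1}_{k,i}=1\}=\frac{L-1}{K-1} \), and completes the square. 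The genuine gap is in the fourth-moment bound \eqref{eq:aggregate_bounded_fourth}, and it is exactly the part you defer. After your global split \( \s_i = \s_i^{\mathrm{samp}}+\s_i^{\mathrm{grad}} \), the bound on \( \s_i^{\mathrm{samp}} = \sum_k p_k\left(\frac{K}{L}\mathds{1}_{k,i}-1\right)\nabla J_k(\w_{i-1}) \) is \emph{not} \( \sum_k p_k\,\E\{(\frac{K}{L}\mathds{1}_{k,i}-1)^4\}\,\|\nabla J_k(\w_{i-1})\|^4 \): cross terms involving two, three, and four distinct agents survive in the fourth power (pair--pair terms would survive even for independent indicators, and here the indicators are correlated by sampling without replacement), which is precisely the ``main obstacle'' you name but do not resolve. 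Likewise, for \( \s_i^{\mathrm{grad}} \) the term \( 3\left(\sum_{k,e}\E\|u_{k,e}\|^2\right)^2 \) in your independent-sum inequality contains cross-agent products \( p_k^2p_\ell^2 \) weighted by \( \E\{\mathds{1}_{k,i}\mathds{1}_{\ell,i}\} \); collapsing them by Cauchy--Schwarz inflates the per-agent constant well beyond \( 3 \), so the claimed \( 24\frac{K^3}{L^3}\sum_k p_k E_k^{-2}(\cdot) \) does not follow from the route as you describe it, and the stated constants \( 192, 64, 24 \) are not recovered.

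The missing idea---and the very first move in the paper's proof---is to apply Jensen's inequality with the weights \( p_k \) over the agent index \emph{before} any other manipulation: since \( \s_i = \sum_{k} p_k\left[\frac{K}{L}\frac{\mathds{1}_{k,i}}{E_k}\sum_{e}\widehat{\nabla J}_k^e(\w_{i-1}) - \nabla J_k(\w_{i-1})\right] \) is a convex combination, one has \( \E\{\|\s_i\|^4|\w_{i-1}\}\le \sum_k p_k\,\E\{\|\frac{K}{L}\frac{\mathds{1}_{k,i}}{E_k}\sum_e \widehat{\nabla J}_k^e(\w_{i-1}) - \nabla J_k(\w_{i-1})\|^4|\w_{i-1}\} \). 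This reduces everything to per-agent quantities involving a \emph{single} indicator, so joint indicator moments never arise: the per-agent split into noise and sampling parts (factor \( 8 \)), the single-indicator moment \( \E\{(\frac{K}{L}\mathds{1}_{k,i}-1)^4\} = \frac{L}{K}\left(\frac{K-L}{L}\right)^4 + \frac{K-L}{K} \), and the averaged-noise bound \( \E\{\|\frac{1}{E_k}\sum_e\s_{k,e}(\w_{i-1})\|^4|\w_{i-1}\}\le \frac{3-2/E_k}{E_k^2}(\beta_k^4\|\nabla J_k(\w_{i-1})\|^4+\sigma_k^4) \) then produce exactly \( 24 = 8\cdot 3 \), \( 192 = 8\cdot 24 \), and \( 64 = 8\cdot 8 \) after the conversion \( \|\nabla J_k\|^4 \le 8\|\nabla J\|^4 + 8G^4 \) that you correctly describe. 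The same device would also rescue your own decomposition, since both \( \s_i^{\mathrm{samp}} \) and \( \s_i^{\mathrm{grad}} \) are \( p_k \)-weighted combinations; without it, your plan requires a substantially harder joint-moment computation whose outcome does not match the lemma's constants.
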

\begin{proof}
  Appendix~\ref{AP:PERTURBATION_BOUNDS}.
\end{proof}

\subsection{Second-Order Guarantees}
Examination of the bounds~\eqref{eq:aggregate_zero_mean}--\eqref{eq:aggregate_bounded_incremental} reveals that the aggregate zero-mean component \( \s_i \) arising from the use of stochastic gradient approximations continues to be bounded in a manner similar to the local approximations~\eqref{eq:bounded_fourth}, where the aggregate constant bounds are determined by the quality of local approximations \( \{\beta_k^4, \sigma_k^4\}_{k=1}^K \), the participation rate \( \frac{L}{K} \), the weights \( \{ p_k \}_{k=1}^K \), the level of heterogeneity \( G \), and number of local updates taken \( E_k \). The bias \( \boldsymbol{d}_i \) induced by employing multiple local updates, on the other hand, does not have zero-mean. The bound on its fourth-order moment~\eqref{eq:aggregate_bounded_incremental}, however, is proportional to \( \mu^4 \), causing its effect to be small for small step-sizes when compared to \( \s_{i} \), which is independent in \( \mu \). The fact that \( \boldsymbol{d}_i \) is biased renders traditional second-order analysis of stochastic gradient algorithms~\cite{Ge15, HadiDaneshmand18, Jin19, Vlaski20Kailath, Vlaski19single} inapplicable to this setting, while the fact that its fourth-order moment is small compared to \( \s_i \) makes it possible to extend the arguments of~\cite{Vlaski20Kailath, Vlaski19single}.
\begin{theorem}
  Suppose the aggregate loss \( J(w) \) is bounded from below by \( J(w) \ge J^o \). Then, with probability \( 1 - 2 \pi \):
  \begin{equation}\label{eq:first_guarantee}
    \| \nabla J(\w_{i^o}) \|^2 \le \mu \frac{\delta \overline{\sigma}^2}{1-2\mu \delta (1 + \overline{\beta}^2)} \left( 1 + \frac{1}{\pi} \right) + O(\mu^2)
  \end{equation}
  and \( \lambda_{\min}\left( \nabla^2 J(\w_{i^o}) \right) \ge -\tau \) in at most \( i^o \) iterations, where
  \begin{align}\label{eq:conv_guarantee}
    i^o \le \frac{2 \left( J(w_{0}) - J^o \right)}{\mu^2 \delta \overline{\sigma}^2} i^s
  \end{align}
  and \( i^s \) denotes the saddle-point escape time:
  \begin{align}
    i^{s} =  \frac{\log\left( 2 M  \frac{{\overline{\sigma}^2}}{{ \overline{\sigma}_{\ell}^2}} + 1 + O(\mu) \right)}{\mathrm{log}\left( 1 + 2 \mu \tau \right)}
  \end{align}
\end{theorem}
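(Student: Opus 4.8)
The plan is to treat the aggregated recursion \eqref{eq:federated_as_gd} as a perturbed gradient descent on \( J = \sum_k p_k J_k \) with two qualitatively different perturbations and to follow the three-phase framework for second-order guarantees developed in~\cite{Vlaski19single, Vlaski20Kailath}. The zero-mean aggregate gradient noise \( \s_i \), whose moments and persistent covariance floor are supplied by Lemma~\ref{LEM:PERTURBATION_BOUNDS}, plays the same role as in the unbiased analysis and will drive saddle-point escape. The biased incremental term \( \boldsymbol{d}_i \), by contrast, is the genuinely new object; the governing idea is that its fourth moment is \( O(\mu^4) \) by \eqref{eq:aggregate_bounded_incremental}, so its mean is \( O(\mu) \) and its effect on both the descent and the escape dynamics is of strictly higher order in \( \mu \) than the noise-driven behavior and can be quarantined into \( O(\mu^2) \) residuals.

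First I would establish a one-step descent relation. Using the \( \delta \)-Lipschitz gradient of \( J \) (inherited from Assumption~\ref{as:smoothness}), expand \( J(\w_i) \) around \( \w_{i-1} \), condition on \( \w_{i-1} \), and use \( \E\{\s_i|\w_{i-1}\}=0 \) from \eqref{eq:aggregate_zero_mean} to annihilate the \( \s_i \) cross-term. The quadratic term is controlled through \eqref{eq:aggregate_bounded_fourth} (via Jensen), contributing \( \tfrac{\delta}{2}\mu^2\big((1+\overline{\beta}^2)\|\nabla J(\w_{i-1})\|^2 + \overline{\sigma}^2\big) \); the biased cross-term \( -\mu\,\nabla J(\w_{i-1})^{\T}\E\{\boldsymbol{d}_i|\w_{i-1}\} \) is split by Young's inequality into a piece proportional to \( \mu\|\nabla J(\w_{i-1})\|^2 \), absorbed into the leading coefficient, and a higher-order residual of order \( \mu^2 \) fed by \eqref{eq:aggregate_bounded_incremental}. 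Collecting terms yields \( \E\{J(\w_i)|\w_{i-1}\} \le J(\w_{i-1}) - \mu\big(1 - 2\mu\delta(1+\overline{\beta}^2)\big)\|\nabla J(\w_{i-1})\|^2 + \tfrac{\delta}{2}\mu^2\overline{\sigma}^2 + O(\mu^3) \), where the factor \( 2 \) in the coefficient records both the quadratic contribution and the absorbed bias cross-term; this is the origin of the denominator \( 1-2\mu\delta(1+\overline{\beta}^2) \). Telescoping this inequality from \( 0 \) to \( i^o \), using \( J \ge J^o \), bounds the running average of \( \E\|\nabla J\|^2 \), and a Markov-type conversion of that average into a high-probability statement produces \eqref{eq:first_guarantee} with probability \( 1-\pi \), the \( \d_i \)-contributions collecting into the stated \( O(\mu^2) \).

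The hard part is the saddle-escape phase, active whenever \( \lambda_{\min}(\nabla^2 J(\w_{i-1})) < -\tau \). Here I would linearize the recursion around the saddle in the eigenbasis of the Hessian and form a short-term model \( \check{\w}_i \approx (I-\mu\nabla^2 J)\check{\w}_{i-1} - \mu\s_i - \mu\boldsymbol{d}_i \), valid over a window of length \( i^s \sim 1/\mu \) for which the \( \rho \)-Taylor remainders stay negligible. Along the most-negative eigendirection the map amplifies by \( (1+\mu\tau) \) per step, while the persistent-noise lower bound \( \overline{\sigma}_{\ell}^2 I \) from Lemma~\ref{LEM:PERTURBATION_BOUNDS} injects a nonvanishing variance into that direction at every iteration, so the expected squared projection grows geometrically and escapes the \( O(\sqrt{\mu}) \)-saddle neighborhood once it reaches the \( \overline{\sigma}^2 \) scale — giving exactly the escape time \( i^s = \log\!\big(2M\,\overline{\sigma}^2/\overline{\sigma}_{\ell}^2 + 1 + O(\mu)\big)/\log(1+2\mu\tau) \) and a strict cost decrease. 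The delicate new element is that \( \boldsymbol{d}_i \) now contributes a deterministic drift to this model; I would bound its accumulated displacement over the window by \( \sum \mu\,\|\boldsymbol{d}_i\| = O(i^s\mu^2) \) through \eqref{eq:aggregate_bounded_incremental} and show it is dominated by the noise-driven growth seeded by \( \overline{\sigma}_{\ell}^2 \), so that the bias can neither cancel nor stall escape. Controlling the tails of this argument — rather than merely the mean — is where the relative fourth-moment condition \eqref{eq:bounded_fourth} is essential, and this is the step I expect to be the principal obstacle.

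Finally I would combine the phases by a cost-decrease counting argument. Every iteration with a large gradient decreases \( J \) by \( \Omega(\mu^2\overline{\sigma}^2) \) by the descent relation, and every saddle encounter decreases \( J \) by a comparable amount over a block of \( i^s \) iterations by the escape analysis; an iterate that is neither is already an approximate second-order stationary point. Since the admissible total decrease is \( J(w_0)-J^o \), the number of non-stationary iterations is at most \( \tfrac{2(J(w_0)-J^o)}{\mu^2\delta\overline{\sigma}^2}\,i^s \), which is \eqref{eq:conv_guarantee}. A union bound over the gradient event and the escape event then upgrades the two probability-\( (1-\pi) \) guarantees into the stated \( 1-2\pi \), completing the argument.
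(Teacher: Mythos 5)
Your proposal takes essentially the same approach as the paper: the paper's entire proof is the one-line remark that the theorem follows by ``an adjustment of~\cite{Vlaski19single} by bounding away the effect of \( \boldsymbol{d}_i \),'' which is precisely the three-phase descent/escape/counting framework you outline, with the bias term quarantined into higher-order-in-\( \mu \) residuals via the \( O(\mu^4) \) bound~\eqref{eq:aggregate_bounded_incremental}. Since the paper omits all details, your sketch is, if anything, more explicit than the published argument about where the bias enters the descent relation and the saddle-escape window.
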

\begin{proof}
  The argument is an adjustment of~\cite{Vlaski19single} by bounding away the effect of \( \boldsymbol{d}_i \). Details omitted due to space limitations.
\end{proof}
\noindent This result ensures that, with probability \( 1 - 2 \pi \), the FedAvg algorithm will return a second-order stationary point with \( \| \nabla J(\w_{i^o}) \|^2 \le O(\mu) \) and \(  \lambda_{\min}\left( \nabla^2 J(\w_{i^o}) \right) \ge -\tau \) in at most \( i^o \) iterations, where \( i^o \) scales polynomially with all problem parameters. Every second-order stationary point, in light of \( \| \nabla J(\w_{i^o}) \|^2 \le O(\mu) \) is also first-order stationary, but the additional condition \( \lambda_{\min}\left( \nabla^2 J(\w_{i^o}) \right) \ge -\tau \) allows for the exclusion of strict saddle-points by choosing \( \tau \) sufficiently small.

\section{Numerical Results}
We illustrate the ability of the FedAvg algorithm to escape saddle-points for:
\begin{align}
  &\:Q(w_1, W_2;\boldsymbol{\gamma}, \boldsymbol{h}) \triangleq \log\left(1+e^{-\boldsymbol{\gamma}w_1^{\T}W_2 \boldsymbol{h}}\right) \\
  &\:J(w_1, W_2) \triangleq \E Q(w_1, W_2;\boldsymbol{\gamma}, \boldsymbol{h}) + \frac{\rho}{2} \|w_1\|^2 + \frac{\rho}{2} \|W_2\|^2
\end{align}
This loss arises when training a neural network with a single, linear hidden layer to predict the class label \( \boldsymbol{\gamma} \) from \( \boldsymbol{h} \) using cross-entropy, and exhibits a strict saddle-point at \( w_1 = W_2 = 0 \), making it suitable as a simplified benchmark --- see~\cite{Vlaski19nonconvexP1} for a discussion and motivation. For a total of \( K = 100 \) agents, we vary the rate of participation from \( L =1 \) to \( L =100 \). Agents are chosen uniformly, and participating agents perform \( E = 10  \) local updates constructed as a combination of Examples~\ref{ex:psgd} and~\ref{ex:straggling}, namely:
\begin{align}
  \widehat{\nabla J}_k^e(\w_{k, e-1}) = \frac{1}{\delta_k} \left( \nabla Q(\w_{k, e-1}, \x_{k, e}) - \rho \w_{k, e-1} \right)
\end{align}
with probability \( p_k = 0.5 \), and \( \widehat{\nabla J}_k^e(\w_{k, e-1}) = 0  \) otherwise. Evolution of iterates and the gradient norm are shown in Figures~\ref{fig:evolution} and~\ref{fig:gradient_norm} respectively.
\begin{figure}[!t]
	\centering
	\includegraphics[width=\columnwidth]{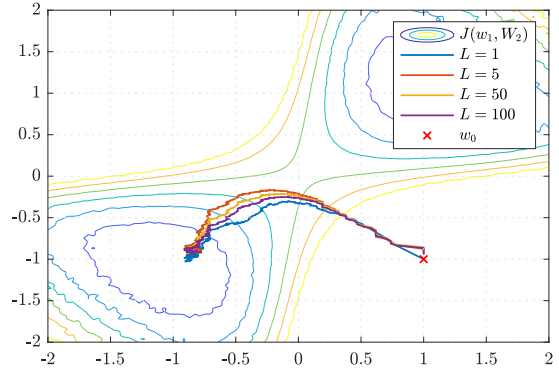}
	\caption{Evolution of the aggregate model for various choices of the participation rate \( \frac{L}{K}\). All implementations escape the saddle-point and find a local minimum.}\label{fig:evolution}
\end{figure}
\begin{figure}[!t]
	\centering
	\includegraphics[width=\columnwidth]{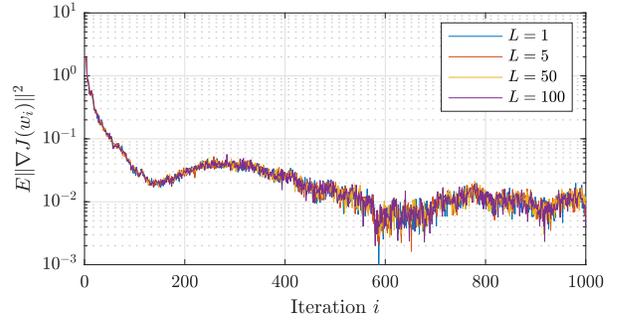}
	\caption{Evolution of the gradient norm for varying participation rates.}\label{fig:gradient_norm}
\end{figure}

\section{Conclusion}
In this work, we considered a highly heterogeneous and asynchronous variant of the Federated Averaging (FedAvg) algorithm, where agents may be using varying, potentially unreliable, stochastic gradient approximations with varying quality, and take a different number \( E_k \) of local update steps, and established convergence to second-order stationary points. Despite high levels of heterogeneity and asynchrony, the algorithm continues to escape saddle-points and return second-order stationary points in polynomial time, shedding light on the success of deep learning, which is frequently employed in federated learning settings.

\bibliographystyle{IEEEbib}
{\bibliography{main}}

\appendix\allowdisplaybreaks[4]
\section{Proof of Lemma~\ref{LEM:PERTURBATION_BOUNDS}}\label{AP:PERTURBATION_BOUNDS}
We begin by establishing that \( \s_{i} \) has conditional zero-mean:
\begin{align}
  &\: \E \left\{ \s_i | \w_{i-1} \right\} \notag \\
  \stackrel{\eqref{eq:def_s}}{=}&\: \frac{K}{L} \sum_{k=1}^K \frac{p_k}{E_k} \sum_{e=1}^{E_k} \E \left\{ \mathds{1}_{k, i} \widehat{\nabla J}_k^e(\w_{i-1}) | \w_{i-1} \right\} - \nabla J(\w_{i-1}) \notag \\
  \stackrel{(a)}{=}&\: \frac{K}{L} \sum_{k=1}^K \frac{p_k}{E_k} \sum_{e=1}^{E_k} \E \left\{ \mathds{1}_{k, i} \right\} \E \left\{ \widehat{\nabla J}_k^e(\w_{i-1}) | \w_{i-1} \right\} - \nabla J(\w_{i-1}) \notag \\
  \stackrel{(b)}{=}&\: \frac{K}{L} \sum_{k=1}^K \frac{p_k}{E_k} \sum_{e=1}^{E_k} \frac{L}{K} {\nabla J}_k(\w_{i-1}) - \nabla J(\w_{i-1}) \notag \\
  =&\: \sum_{k=1}^K p_k {\nabla J}_k(\w_{i-1}) - \nabla J(\w_{i-1}) = 0 \label{eq:zero_mean_appendix}
\end{align}
where \( (a) \) follows because participation \( \mathds{1}_{k, i} \) is independent of \( \w_{i-1} \) and the data available at time \( i \), and hence \( \widehat{\nabla J}_k^e(\w_{i-1}) \). Step \( (b) \) follows from \( \E \left\{ \mathds{1}_{k, i} \right\} = \frac{L}{K} \) and~\eqref{eq:zero_mean}.
We now proceed to evaluate the aggregate gradient noise covariance. For brevity, we define:
\begin{align}\label{eq:def_g}
  \boldsymbol{g}_{k, i} \triangleq \frac{1}{E_k} \sum_{e=1}^{E_k} \widehat{\nabla J}_k^e(\w_{i-1})
\end{align}
Then:
\begin{align}
  \s_{i} \triangleq \frac{K}{L} \sum_{k=1}^K \mathds{1}_{k, i} p_k \boldsymbol{g}_{k, i} - \nabla J(\w_{i-1})
\end{align}
For the aggregate gradient noise covariance, we have:
\begin{align}
  &\: \E \left\{ \s_i \s_i^{\T} | \w_{i-1} \right\}\notag \\
  =&\: \E \Bigg\{ \left( \frac{K}{L} \sum_{k=1}^K \mathds{1}_{k, i} p_k \boldsymbol{g}_{k, i} - \nabla J(\w_{i-1}) \right) \notag \\
  &\: \ \ \ \ \ \ \ \times {\left( \frac{K}{L} \sum_{k=1}^K \mathds{1}_{k, i} p_k \boldsymbol{g}_{k, i} - \nabla J(\w_{i-1}) \right)}^{\T} | \w_{i-1} \Bigg\}\notag \\
  \stackrel{(a)}{=}&\: \E \Bigg\{ \left( \frac{K}{L} \sum_{k=1}^K \mathds{1}_{k, i} p_k \boldsymbol{g}_{k, i} \right)  {\left( \frac{K}{L} \sum_{k=1}^K \mathds{1}_{k, i} p_k \boldsymbol{g}_{k, i} \right)}^{\T} | \w_{i-1} \Bigg\}\notag \\
  &\: - \nabla J(\w_{i-1}) {\nabla J(\w_{i-1})}^{\T} \notag \\
  {=}&\: \E \Bigg\{ \left( \frac{K^2}{L^2} \sum_{k=1}^K \sum_{\ell=1}^K \mathds{1}_{k, i} \mathds{1}_{\ell, i} p_k p_{\ell} \boldsymbol{g}_{k, i} \boldsymbol{g}_{\ell, i}^{\T} \right) | \w_{i-1} \Bigg\} \notag \\
  &\: - \nabla J(\w_{i-1}) {\nabla J(\w_{i-1})}^{\T}\label{eq:intermediate_cross_terms}
\end{align}
where \( (a) \) follows after multiplying and simplifying cross-terms by noting that:
\begin{equation}
  \E \left\{\frac{K}{L} \sum_{k=1}^K \mathds{1}_{k, i} p_k \boldsymbol{g}_{k, i} | \w_{i-1} \right\} \stackrel{\eqref{eq:zero_mean_appendix}}{=} \nabla J(\w_{i-1})
\end{equation}
The challenge in evaluating~\eqref{eq:intermediate_cross_terms} lies in the fact that, while the approximations \( \boldsymbol{g}_{k, i} \) and \( \boldsymbol{g}_{\ell, i} \) are mutually independent by Assumption~\ref{as:approximation}, the same does not hold for the participation indicators \( \mathds{1}_{k, i} \) and \( \mathds{1}_{\ell, i} \), since agents are sampled without replacement. We can nevertheless evaluate:
\begin{align}
  &\: \E \left\{ \left( \frac{K^2}{L^2} \sum_{k=1}^K \sum_{\ell=1}^K \mathds{1}_{k, i} \mathds{1}_{\ell, i} p_k p_{\ell} \boldsymbol{g}_{k, i} \boldsymbol{g}_{\ell, i}^{\T} \right) | \w_{i-1} \right\} \notag \\
  \stackrel{(a)}{=}&\: \frac{K^2}{L^2} \sum_{k=1}^K \sum_{\ell=1}^K \E \left\{  \mathds{1}_{k, i} \mathds{1}_{\ell, i} p_k p_{\ell} \boldsymbol{g}_{k, i} \boldsymbol{g}_{\ell, i}^{\T} | \w_{i-1}, \mathds{1}_{k, i} = 1\right\} \notag \\
  &\: \times \mathrm{Pr} \left\{ \mathds{1}_{k, i} = 1 \right\} \notag \\
  =&\: \frac{L}{K} \frac{K^2}{L^2} \sum_{k=1}^K \sum_{\ell=1}^K \E \left\{ \mathds{1}_{\ell, i} p_k p_{\ell} \boldsymbol{g}_{k, i} \boldsymbol{g}_{\ell, i}^{\T} | \w_{i-1}, \mathds{1}_{k, i} = 1\right\} \notag \\
  \stackrel{(b)}{=}&\: \frac{K}{L} \sum_{k=1}^K \sum_{\ell=1}^K \E \left\{  \mathds{1}_{\ell, i} | \mathds{1}_{k, i} = 1 \right\} p_k p_{\ell}\E \left\{ \boldsymbol{g}_{k, i} \boldsymbol{g}_{\ell, i}^{\T} | \w_{i-1}\right\} \notag \\
  \stackrel{(c)}{=}&\: \frac{K}{L} \sum_{k=1}^K \E \left\{  \mathds{1}_{k, i} | \mathds{1}_{k, i} = 1 \right\} p_k^2 \E \left\{ \boldsymbol{g}_{k, i} \boldsymbol{g}_{k, i}^{\T} | \w_{i-1}\right\} \notag \\
  &\: + \frac{K}{L}  \sum_{k=1}^K \sum_{\ell \neq k} \E \left\{  \mathds{1}_{\ell, i} | \mathds{1}_{k, i} = 1 \right\} p_k p_{\ell} \E \left\{ \boldsymbol{g}_{k, i} \boldsymbol{g}_{\ell, i}^{\T} | \w_{i-1}\right\} \notag \\
  \stackrel{(d)}{=}&\: \frac{K}{L}  \sum_{k=1}^K p_k^2 \E \left\{  \boldsymbol{g}_{k, i} \boldsymbol{g}_{k, i}^{\T}  | \w_{i-1}\right\} \notag \\
  &\: + \frac{K}{L}\frac{L-1}{K-1}   \sum_{k=1}^K \sum_{\ell \neq k} p_k p_{\ell} \E \left\{\boldsymbol{g}_{k, i} \boldsymbol{g}_{\ell, i}^{\T}  | \w_{i-1}\right\}
\end{align}
where \( (a) \) follows from Bayes' theorem and \( (b) \) is due to the fact that \( \mathds{1}_{k, i} \) and \( \w_{i-1} \) are independent, \( (c) \) separates cross-terms and \( (d) \) results from \( \E \left\{  \mathds{1}_{\ell, i} | \mathds{1}_{k, i} = 1 \right\} = \frac{L-1}{K-1} \). We return to~\eqref{eq:intermediate_cross_terms}:
\begin{align}
  &\: \E \left\{ \s_i \s_i^{\T} | \w_{i-1} \right\}\notag \\
  =&\: \frac{K}{L}  \sum_{k=1}^K p_k^2 \E \left\{  \boldsymbol{g}_{k, i} \boldsymbol{g}_{k, i}^{\T}  | \w_{i-1}\right\} \notag \\
  &\: - \left( \sum_{k=1}^K p_k \nabla J_k(\w_{i-1}) \right) {\left( \sum_{\ell=1}^K p_{\ell} \nabla J_{\ell}(\w_{i-1}) \right)}^{\T} \notag \\
  &\: + \frac{K}{L}\frac{L-1}{K-1}   \sum_{k=1}^K \sum_{\ell \neq k} p_k p_{\ell} \E \left\{\boldsymbol{g}_{k, i} \boldsymbol{g}_{\ell, i}^{\T}  | \w_{i-1}\right\} \notag \\
  \stackrel{(a)}{=}&\: \frac{K}{L}  \sum_{k=1}^K p_k^2 \E \left\{  \boldsymbol{g}_{k, i} \boldsymbol{g}_{k, i}^{\T}  | \w_{i-1}\right\}  - \sum_{k=1}^K p_k^2 \nabla J_k(\w_{i-1}) \nabla J_k(\w_{i-1})^{\T} \notag \\
  &\: - \sum_{k=1}^K \sum_{\ell \neq k} p_k p_{\ell} \nabla J_k(\w_{i-1})  \nabla J_{\ell}(\w_{i-1})^{\T} \notag \\
  &\: + \frac{K}{L}\frac{L-1}{K-1}   \sum_{k=1}^K \sum_{\ell \neq k} p_k p_{\ell} \E \left\{\boldsymbol{g}_{k, i} \boldsymbol{g}_{\ell, i}^{\T}  | \w_{i-1}\right\} \notag \\
  \stackrel{(b)}{=}&\: \frac{K}{L} \sum_{k=1}^K p_k^2 \E \Big\{ \left( \boldsymbol{g}_{k, i} - \nabla J_k(\w_{i-1}) \right) \notag \\
  &\: \ \ \ \ \ \ \ \ \ \ \ \ \ \ \ \ \ \ \ \ \ \ \ \ \ \ \times \left( \boldsymbol{g}_{k, i} - \nabla J_k(\w_{i-1}) \right)^{\T}  | \w_{i-1}\Big\} \notag \\
  &\: + \frac{K-L}{L}  \sum_{k=1}^K p_k^2 \nabla J_k(\w_{i-1}) \nabla J_k(\w_{i-1})^{\T} \notag \\
  &\: + \left( \frac{K}{L}\frac{L-1}{K-1} - 1 \right) \sum_{k=1}^K \sum_{\ell \neq k} p_k p_{\ell} \nabla J_k(\w_{i-1})  \nabla J_{\ell}(\w_{i-1})^{\T} \notag \\
  &\: - \frac{K}{L}\frac{L-1}{K-1} \sum_{k=1}^K \sum_{\ell \neq k} p_k p_{\ell} \nabla J_k(\w_{i-1})  \nabla J_{\ell}(\w_{i-1})^{\T} \notag \\
  &\: + \frac{K}{L}\frac{L-1}{K-1}   \sum_{k=1}^K \sum_{\ell \neq k} p_k p_{\ell} \E \left\{\boldsymbol{g}_{k, i} \boldsymbol{g}_{\ell, i}^{\T}  | \w_{i-1}\right\} \notag \\
  \stackrel{(c)}{=}&\: \frac{K}{L}\sum_{k=1}^K \frac{p_k^2}{E_k} R_{s, k}(\w_{i-1}) \notag \\
  &\:+ \frac{K-L}{L}  \sum_{k=1}^K p_k^2 \nabla J_k(\w_{i-1}) \nabla J_k(\w_{i-1})^{\T} \notag \\
  &\: - \frac{K-L}{L (K-1)} \sum_{k=1}^K \sum_{\ell \neq k} p_k p_{\ell} \nabla J_k(\w_{i-1})  \nabla J_{\ell}(\w_{i-1})^{\T} \notag \\
  \stackrel{(d)}{=}&\: \frac{K}{L}\sum_{k=1}^K \frac{p_k^2}{E_k} R_{s, k}(\w_{i-1}) \notag \\
  &\: + \frac{K-L}{L}  \sum_{k=1}^K p_k^2 \nabla J_k(\w_{i-1}) \nabla J_k(\w_{i-1})^{\T} \notag \\
  &\: + \frac{K-L}{L (K-1)} \sum_{k=1}^K p_k^2 \nabla J_k(\w_{i-1}) \nabla J_k(\w_{i-1})^{\T} \notag \\
  &\: - \frac{K-L}{L (K-1)} \nabla J(\w_{i-1}) \nabla J(\w_{i-1})^{\T} \notag \\
  =&\: \frac{K}{L}\sum_{k=1}^K \frac{p_k^2}{E_k} R_{s, k}(\w_{i-1}) \notag \\
  &\: + \frac{K}{L} \frac{K-L}{K-1} \sum_{k=1}^K p_k^2 \nabla J_k(\w_{i-1}) \nabla J_k(\w_{i-1})^{\T} \notag \\
  &\: - \frac{K-L}{L (K-1)} \nabla J(\w_{i-1}) \nabla J(\w_{i-1})^{\T} \notag \\
  \stackrel{(e)}{=}&\: \frac{K}{L}\sum_{k=1}^K \frac{p_k^2}{E_k} R_{s, k}(\w_{i-1})   \notag \\
  &\: + \frac{K}{L} \frac{K-L}{K-1} \sum_{k=1}^K \Bigg( \left( p_k \nabla J_k(\w_{i-1}) - \frac{1}{K} \nabla J(\w_{i-1}) \right) \notag \\
  &\: \ \ \ \ \ \ \ \ \ \ \ \ \ \ \ \ \ \ \ \ \ \ \ \times \left( p_k \nabla J_k(\w_{i-1}) - \frac{1}{K} \nabla J(\w_{i-1}) \right)^{\T} \Bigg)
\end{align}
where \( (a) \) multiplies \( \left( \sum_{k=1}^K p_k \nabla J_k(\w_{i-1}) \right) {\left( \sum_{\ell=1}^K p_{\ell} \nabla J_{\ell}(\w_{i-1}) \right)}^{\T} \) and separates cross-terms, \( (b) \) combines terms using the fact that \( \E \left\{ \boldsymbol{g}_{k, i} | \w_{i-1} \right\} = \nabla J_k(\w_{i-1}) \), \( (c) \) follows from~\eqref{eq:def_g},~\eqref{eq:def_rs} and the fact the \( \boldsymbol{g}_{k, i} \) are mutually independent. Step \( (d) \) completes the square to obtain \( \nabla J(\w_{i-1}) \nabla J(\w_{i-1})^{\T} \) and \( (e) \) can be verified by multiplying out the result. For the fourth-order moment, we have following the argument in~\cite[Example 7]{Vlaski20Kailath}:
\begin{align}
  &\: \E \left\{{\| \s_i\|}^4 | \w_{i-1} \right\} \notag \\
  \stackrel{\eqref{eq:def_s}}{=}&\: \E \left\{\left\| \frac{K}{L} \sum_{k=1}^K \frac{p_k}{E_k} \sum_{e=1}^{E_k} \mathds{1}_{k, i} \widehat{\nabla J}_k^e(\w_{i-1}) - \nabla J(\w_{i-1}) \right\|^4 | \w_{i-1} \right\} \notag \\
  {=}&\: \E \Bigg\{ \Bigg\| \frac{K}{L} \sum_{k=1}^K \frac{p_k}{E_k} \sum_{e=1}^{E_k} \mathds{1}_{k, i} \widehat{\nabla J}_k^e(\w_{i-1}) \notag \\
  &\: \ \ \ \ \ \ \ \ \ \ - \sum_{k=1}^K p_k \nabla J_k(\w_{i-1}) \Bigg\|^4 | \w_{i-1} \Bigg\} \notag \\
  \stackrel{(a)}{\le}&\:\sum_{k=1}^K p_k  \E \Bigg\{ \Bigg\| \frac{K}{L} \frac{1}{E_k} \sum_{e=1}^{E_k} \mathds{1}_{k, i} \widehat{\nabla J}_k^e(\w_{i-1}) \notag \\
  &\: \ \ \ \ \ \ \ \ \ \ \ \ \ \ \ \ \ \ \ \ \ - \nabla J_k(\w_{i-1}) \Bigg\|^4 | \w_{i-1} \Bigg\}
\end{align}
where \( (a) \) follows by Jensen's inequality. We proceed with the individual terms of the sum:
\begin{align}
  &\:\E \left\{ \left\| \frac{K}{L} \frac{1}{E_k} \sum_{e=1}^{E_k} \mathds{1}_{k, i} \widehat{\nabla J}_k^e(\w_{i-1}) - \nabla J_k(\w_{i-1}) \right\|^4 | \w_{i-1} \right\} \notag \\
  =&\:\E \Bigg\{ \Bigg\| \frac{K}{L} \frac{1}{E_k} \sum_{e=1}^{E_k} \mathds{1}_{k, i} \left( \widehat{\nabla J}_k^e(\w_{i-1}) - \nabla J_k(\w_{i-1})  \right) \notag \\
  &\: \ \ \ \ \ \ \ \ \ \ + \left( \frac{K}{L} \mathds{1}_{k, i} - 1 \right) \nabla J_k(\w_{i-1}) \Bigg\|^4 | \w_{i-1} \Bigg\} \notag \\
  \stackrel{\eqref{eq:local_gradient_noise}}{=}&\:\E \Bigg\{ \Bigg\| \frac{K}{L} \frac{1}{E_k} \sum_{e=1}^{E_k} \mathds{1}_{k, i} \s_{k, e}(\w_{i-1}) \notag \\
  &\: \ \ \ \ \ \ \ \ \ \ + \left( \frac{K}{L} \mathds{1}_{k, i} - 1 \right) \nabla J_k(\w_{i-1}) \Bigg\|^4 | \w_{i-1} \Bigg\} \notag \\
  \stackrel{(a)}{\le}&\: 8 \E \left\{ \left\| \frac{K}{L} \frac{1}{E_k} \sum_{e=1}^{E_k} \mathds{1}_{k, i} \s_{k, e}(\w_{i-1}) \right\|^4 | \w_{i-1} \right\} \notag \\
  &\:+ 8 \E \left\{ \left\| \left( \frac{K}{L} \mathds{1}_{k, i} - 1 \right) \nabla J_k(\w_{i-1}) \right\|^4 | \w_{i-1} \right\} \notag \\
  \stackrel{(b)}{=}&\: 8 \frac{K^4}{L^4} \E \left\{ \left\| \frac{1}{E_k} \sum_{e=1}^{E_k} \mathds{1}_{k, i} \s_{k, e}(\w_{i-1}) \right\|^4 | \w_{i-1} \right\} \notag \\
  &\:+ 8 \E \left\{ \left( \frac{K}{L} \mathds{1}_{k, i} - 1 \right)^4 \right\} \left\|  \nabla J_k(\w_{i-1}) \right\|^4 \notag \\
  \stackrel{(c)}{=}&\: 8 \frac{K^4}{L^4} \mathrm{Pr}\left\{ \mathds{1}_{k, i} = 1 \right\} \E \left\{ \left\| \frac{1}{E_k} \sum_{e=1}^{E_k} \s_{k, e}(\w_{i-1}) \right\|^4 | \w_{i-1} \right\} \notag \\
  &\:+ 8 \mathrm{Pr}\left\{ \mathds{1}_{k, i} = 1 \right\} \E \left\{\left( \frac{K}{L} - 1 \right)^4 \right\} \left\|  \nabla J_k(\w_{i-1}) \right\|^4 \notag \\
  &\:+ 8 \mathrm{Pr}\left\{ \mathds{1}_{k, i} = 0 \right\} \E \left\{\left( - 1 \right)^4 \right\} \left\|  \nabla J_k(\w_{i-1}) \right\|^4 \notag \\
  \stackrel{(d)}{=}&\: 8 \frac{K^4}{L^4} \frac{L}{K} \E \left\{ \left\| \frac{1}{E_k} \sum_{e=1}^{E_k} \s_{k, e}(\w_{i-1}) \right\|^4 | \w_{i-1} \right\} \notag \\
  &\:+ 8 \left( \frac{L}{K} \left( \frac{K-L}{L} \right)^4 + \frac{K-L}{K} \right) \left\|  \nabla J_k(\w_{i-1}) \right\|^4 \notag \\
  \stackrel{(e)}{\le}&\: 24 \frac{K^3}{L^3} \frac{1}{E_k^2} \left( \beta_k^4 \left\|  \nabla J_k(\w_{i-1}) \right\|^4 + \sigma_k^4 \right) \notag \\
  &\:+ 8 \left( \frac{L}{K} \left( \frac{K-L}{L} \right)^4 + \frac{K-L}{K} \right) \left\|  \nabla J_k(\w_{i-1}) \right\|^4 \notag \\
  =&\: \left( 24 \frac{K^3}{L^3} \frac{\beta_k^4}{E_k^2}  + 8 \frac{L}{K} \left( \frac{K-L}{L} \right)^4 + 8 \frac{K-L}{K} \right) \left\|  \nabla J_k(\w_{i-1}) \right\|^4  \notag \\
  &\: + 24 \frac{K^3}{L^3} \frac{\sigma_k^4}{E_k^2} \notag \\
  =&\: \left( 24 \frac{K^3}{L^3} \frac{\beta_k^4}{E_k^2}  + 8 \frac{L}{K} \left( \frac{K-L}{L} \right)^4 + 8 \frac{K-L}{K} \right) \notag \\
  &\: \times \left\| \nabla J(\w_{i-1}) + \nabla J_k(\w_{i-1}) - \nabla J(\w_{i-1}) \right\|^4  \notag \\
  &\: + 24 \frac{K^3}{L^3} \frac{\sigma_k^4}{E_k^2} \notag \\
  \stackrel{(f)}{\le}&\: \left( 24 \frac{K^3}{L^3} \frac{\beta_k^4}{E_k^2}  + 8 \frac{L}{K} \left( \frac{K-L}{L} \right)^4 + 8 \frac{K-L}{K} \right) \notag \\
  &\: \times \left( 8 \left\| \nabla J(\w_{i-1}) \right\|^4 + 8 \left\| \nabla J_k(\w_{i-1}) - \nabla J(\w_{i-1}) \right\|^4 \right)  \notag \\
  &\: + 24 \frac{K^3}{L^3} \frac{\sigma_k^4}{E_k^2} \notag \\
  \stackrel{\eqref{eq:disagreement}}{\le}&\: \left( 24 \frac{K^3}{L^3} \frac{\beta_k^4}{E_k^2}  + 8 \frac{L}{K} \left( \frac{K-L}{L} \right)^4 + 8 \frac{K-L}{K} \right) \notag \\
  &\: \times \left( 8 \left\| \nabla J(\w_{i-1}) \right\|^4 + 8 G^4 \right) + 24 \frac{K^3}{L^3} \frac{\sigma_k^4}{E_k^2} \notag \\
  =&\: \left( 192 \frac{K^3}{L^3} \frac{\beta_k^4}{E_k^2}  + 64 \frac{L}{K} \left( \frac{K-L}{L} \right)^4 + 64 \frac{K-L}{K} \right) \left\| \nabla J(\w_{i-1}) \right\|^4\notag \\
  &\:+ \left( 192 \frac{K^3}{L^3} \frac{\beta_k^4}{E_k^2} + 64 \frac{L}{K} \left( \frac{K-L}{L} \right)^4 + 64 \frac{K-L}{K} \right) G^4 \notag \\
  &\: + 24 \frac{K^3}{L^3} \frac{\sigma_k^4}{E_k^2}
\end{align}
where \( (a) \) and \( (f) \) follow from Jensen's inequality, \( (b) \) uses the fact that \( \mathds{1}_{k, i} \) is independent of \( \widehat{\nabla J}_k^e(\w_{i-1}) \), \( (c) \) applies Bayes' theorem and \( (d) \) uses \( \mathrm{Pr}\left\{ \mathds{1}_{k, i} = 1 \right\} = \E \left\{ \mathds{1}_{k, i} \right\} = \frac{L}{K} \). Step \( (e) \) follows from~\eqref{eq:bounded_fourth} and:
\begin{align}
  &\:\E \left\{ \left\| \frac{1}{E_k} \sum_{e=1}^{E_k} \s_{k, e}(\w_{i-1}) \right\|^4 | \w_{i-1} \right\} \notag \\
  \le&\: \frac{3 - \frac{2}{E_k}}{E_k^2} \left( \beta_k^4 \left\|  \nabla J_k(\w_{i-1}) \right\|^4 + \sigma_k^4 \right),
\end{align}
which can be verified by induction over \( E_k \)~\cite{Vlaski20Kailath}. Next, we bound the fourth-moment of the term \( \boldsymbol{d}_i \), arising from the fact that agents take \( E_k > 1 \) local gradient steps before returning the updated estimate to the parameter server. We introduce \( \boldsymbol{d}_{k, e-1} \triangleq \widehat{\nabla J}_k^e(\w_{k, e-1}) - \widehat{\nabla J}_k^e(\w_{i-1}) \) for brevity. Then, we have:
\begin{align}
  &\: \E \left\{{\| \boldsymbol{d}_i\|}^4 | \w_{i-1} \right\} \notag \\
  \stackrel{\eqref{eq:def_d}}{=}&\: \E \left\{{\left\| \frac{K}{L} \sum_{k=1}^K \mathds{1}_{k, i} \frac{p_k}{E_k} \sum_{e=1}^{E_k} \boldsymbol{d}_{k, e-1} \right\|}^4 | \w_{i-1} \right\} \notag \\
  \stackrel{(a)}{\le}&\: \sum_{k=1}^K p_k \E \left\{{\left\| \frac{K}{L}  \frac{\mathds{1}_{k, i}}{E_k} \sum_{e=1}^{E_k} \boldsymbol{d}_{k, e-1} \right\|}^4 | \w_{i-1} \right\} \notag \\
  {=}&\: \sum_{k=1}^K p_k \frac{K^4}{L^4} \E \left\{{\left\| \frac{\mathds{1}_{k, i}}{E_k} \sum_{e=1}^{E_k} \boldsymbol{d}_{k, e-1} \right\|}^4 | \w_{i-1} \right\} \notag \\
  \stackrel{(b)}{=}&\: \sum_{k=1}^K p_k \frac{K^4}{L^4} \mathrm{Pr}\left\{ \mathds{1}_{k, i} = 1 \right\}\E \left\{{\left\| \frac{1}{E_k} \sum_{e=1}^{E_k} \boldsymbol{d}_{k, e-1} \right\|}^4 | \w_{i-1} \right\} \notag \\
  \stackrel{(c)}{=}&\: \sum_{k=1}^K p_k \frac{K^3}{L^3}\E \left\{{\left\| \frac{1}{E_k} \sum_{e=1}^{E_k} \boldsymbol{d}_{k, e-1} \right\|}^4 | \w_{i-1} \right\} \notag \\
  \stackrel{(d)}{\le}&\: \sum_{k=1}^K p_k \frac{K^3}{L^3} \frac{1}{E_k} \sum_{e=1}^{E_k} \E \left\{{\left\| \boldsymbol{d}_{k, e-1} \right\|}^4 | \w_{i-1} \right\} \notag \\
  \stackrel{\eqref{eq:ms_lipschitz}}{\le}&\: \sum_{k=1}^K p_k \frac{K^3}{L^3} \frac{\hat{\delta}^4}{E_k} \sum_{e=1}^{E_k} \E \left\{{\left\| \w_{k, e-1} - \w_{i-1}\right\|}^4 | \w_{i-1} \right\}\label{eq:intermediate}
\end{align}
where \( (a) \) and \( (d) \) follow from Jensen's inequality, \( (b) \) applies a Bayes' decomposition and \( (c) \) follows form \( \mathrm{Pr} \left\{\mathds{1}_{k, i} =1  \right\} = \frac{L}{K} \). We now bound the deviation of estimates over one epoch. For \( e = 1\), we have \( \w_{k, e-1} = \w_{k, 0} = \w_{i-1} \) and hence \( \E \left\{{\left\| \w_{k, e-1} - \w_{i-1}\right\|}^4 | \w_{i-1} \right\} = 0 \). For \( e \ge 2 \), iterating~\eqref{eq:federated_local}, we find:
\begin{align}
  &\:\E \left\{{\left\| \w_{k, e-1} - \w_{i-1} \right\|}^4 | \w_{i-1} \right\} \notag\\
  =&\:\E \left\{{\left\| \mu K \mathds{1}_{k, i} \frac{p_k}{E_k} \sum_{j=1}^{e-1}  \widehat{\nabla J}_k^e(\w_{k, j-1}) \right\|}^4 | \w_{i-1} \right\} \notag \\
  =&\: \mu^4 p_k^4 K^4 \E \left\{{\left\| \mathds{1}_{k, i} \frac{1}{E_k} \sum_{j=1}^{e-1}  \widehat{\nabla J}_k^e(\w_{k, j-1}) \right\|}^4 | \w_{i-1} \right\} \notag \\
  \stackrel{(a)}{=}&\: \mu^4 p_k^4 K^4 \mathrm{Pr}\left\{\mathds{1}_{k, i} =1 \right\} \E \left\{{\left\| \frac{1}{E_k} \sum_{j=1}^{e-1}  \widehat{\nabla J}_k^e(\w_{k, j-1}) \right\|}^4 | \w_{i-1} \right\} \notag \\
  \stackrel{(b)}{=}&\: \mu^4 p_k^4 K^4 \frac{L}{K} \E \left\{{\left\| \frac{1}{E_k} \sum_{j=1}^{e-1}  \widehat{\nabla J}_k^e(\w_{k, j-1}) \right\|}^4 | \w_{i-1} \right\} \notag \\
  {=}&\: \mu^4 p_k^4 K^4 \frac{L}{K} \E \left\{{\left\| \frac{e-1}{E_k} \frac{1}{e-1} \sum_{j=1}^{e-1}  \widehat{\nabla J}_k^e(\w_{k, j-1}) \right\|}^4 | \w_{i-1} \right\} \notag \\
  {=}&\: \mu^4 p_k^4 K^4 \frac{L}{K} \frac{(e-1)^4}{E_k^4} \E \left\{{\left\| \frac{1}{e-1} \sum_{j=1}^{e-1}  \widehat{\nabla J}_k^e(\w_{k, j-1}) \right\|}^4 | \w_{i-1} \right\} \notag \\
  \stackrel{(c)}{\le}&\: \mu^4 p_k^4 K^4  \frac{L}{K} \frac{(e-1)^3}{E_k^4} \sum_{j=1}^{e-1} \E \left\{ {\left\|  \widehat{\nabla J}_k^e(\w_{k, j-1}) \right\|}^4 | \w_{i-1} \right\} \notag \\
  \stackrel{(d)}{\le}&\: \mu^4 p_k^4 K^4  \frac{L}{K} \frac{1}{E_k} \sum_{j=1}^{e-1} \E \left\{ {\left\|  \widehat{\nabla J}_k^e(\w_{k, j-1}) \right\|}^4 | \w_{i-1} \right\} \notag \\
  \stackrel{(e)}{\le}&\: \mu^4 p_k^4 K^4  \frac{L}{K}  \frac{1}{E_k} \sum_{j=1}^{e-1} 8 \E \left\{{\left\|  {\nabla J}_k(\w_{k, j-1})  \right\|}^4| \w_{i-1} \right\} \notag \\
  &\: +\mu^4 p_k^4 K^4  \frac{L}{K}  \frac{1}{E_k} \sum_{j=1}^{e-1} 8 \E \left\{{\left\|  \s_{k, j-1}(\w_{k, j-1})\right\|}^4| \w_{i-1} \right\} \notag \\
  \stackrel{(f)}{\le}&\: \mu^4 p_k^4 K^4  \frac{L}{K} 8 U^4 +\mu^4 p_k^4 K^4  \frac{L}{K}  8 \left( \beta_k^4 U^4 + \sigma_k^4 \right) \notag \\
  =&\: \mu^4 p_k^4 K^4  \frac{L}{K} 8 \left(U^4+ \beta_k^4 U^4 + \sigma_k^4 \right)
\end{align}
where \( (a) \) and \( (b) \) follow from Bayes' theorem and \( \mathrm{Pr}\left\{\mathds{1}_{k, i} =1 \right\} = \frac{L}{K} \), \( (c) \) and \( (e) \) follows from~\eqref{eq:local_gradient_noise} and Jensen's inequality, \( (d) \) follows from \( e-1\le E_k \), and \( (f) \) follows from~\eqref{eq:bounded_fourth} and \eqref{eq:bounded_gradient} and the fact that \( e-1\le E_k \). Returning to~\eqref{eq:intermediate}, we have:
\begin{align}
  \E \left\{{\| \boldsymbol{d}_i\|}^4 | \w_{i-1} \right\} \stackrel{\eqref{eq:intermediate}}{\le}&\: \mu^4 \sum_{k=1}^K p_k^5 \frac{K^6}{L^2} \hat{\delta}^4 8 \left(U^4+ \beta_k^4 U^4 + \sigma_k^4 \right)
\end{align}

\end{document}